\newcommand{\e}[1]{\mathrm{e}^{#1}}
\newcommand{\R}{\mathbb{R}}
\newcommand{\goto}{\rightarrow}
\newcommand{\iid}{i.i.d.~}
\newcommand{\T}{\top}
\newcommand\ko{\bm{X}_{\textnormal{\tiny KO}}}
\newcommand\X{\bm{X}}
\newcommand\y{\bm{y}}
\newcommand\z{\bm{z}}
\newcommand\E{\mathbb{E}}
\renewcommand\P{\mathbb{P}}
\newcommand\sgn{\operatorname{sgn}}
\newcommand{\nullx}{\textnormal{null }}
\DeclareMathOperator*{\argmin}{argmin}
\newtheorem{theorem}{Theorem}[section]
\newtheorem{proposition}[theorem]{Proposition}
\newtheorem{definition}[theorem]{Definition}
\newtheorem{lemma}[theorem]{Lemma}
\newcommand\wfdp{w\textnormal{FDP}}
\newcommand\wfdr{w\textnormal{FDR}}
\newcommand\fdp{\textnormal{FDP}}
\newcommand\tpp{\textnormal{power}}
\title{Communication-Efficient False Discovery Rate Control via \\Knockoff Aggregation}
\author{Weijie Su \and Junyang Qian \and Linxi Liu}
\date{}
\begin{document}
\maketitle

{\centering
  \vspace*{-0.3cm}  Department of Statistics, Stanford
  University, Stanford, CA 94305, USA\par\bigskip \date{November 2015}\par
}

\begin{abstract}
The false discovery rate (FDR)---the expected fraction of spurious discoveries among all the
discoveries---provides a popular statistical assessment of the reproducibility of scientific studies in various disciplines. In this work, we introduce a new method for controlling the FDR in meta-analysis of many decentralized linear models. Our method targets the scenario
where many research groups---possibly the number of which is random---are independently testing a common set of hypotheses and then sending
summary statistics to a coordinating center in an online
manner. Built on the knockoffs framework introduced by Barber and
Cand\`es (2015), our procedure starts by applying the knockoff filter
to each linear model and then aggregates the summary statistics via
one-shot communication in a novel way. This method gives exact FDR
control non-asymptotically without any knowledge of the noise
variances or making any assumption about sparsity of the signal. In certain settings, it has a communication complexity that is optimal up to a logarithmic factor.
\end{abstract}

\section{Introduction}
Modern scientific discoveries are commonly supported by statistical significance summarized from exploring datasets. In our present world of Big Data, there are a number of difficulties with this scenario: an increasing number of hypotheses tested simultaneously, extensive use of sophisticated techniques, and enormous tuning parameters. In this pipeline, spurious discoveries arise naturally by \textit{mere} random chance alone across nearly all disciplines including health care \cite{ioannidis2005,begley2012,freedman2015},  machine learning \cite{dwork2014, demvsar2006statistical}, and neuroscience \cite{kriegeskorte2009circular}.

To address this challenge, the statistical community in the past two decades has developed a variety of approaches. A landmark work \cite{BH1995} proposed the false discovery rate (FDR) as a new measure of type-I error for claiming discoveries, along with the elegant Benjamini-Hochberg procedure (BHq) for controlling the FDR in the case of independent test statistics. Roughly speaking, FDR is the expected fraction of erroneously made discoveries among all the claimed discoveries. Today, this concept has been widely accepted as a criterion for providing evidence about the reproducibility of discoveries claimed in one experiment.

Our motivation for this work is further enhanced by the observation that scientific experiments are inherently decentralized in nature, where a given set of hypotheses are probed by several groups working in parallel. For an individual group, its access to datasets collected by the others is very limited. Then, challenges arise on how to statistically and efficiently perform meta-analysis of results from all groups for controlling the FDR while maintaining a higher power (the fraction of correctly identified true discoveries) compared to an individual group. As a running example, imagine that an initiative is intended to study the genetic causes of autism across many research institutes. Due to the privacy and confidentiality of the datasets held by different institutes, it would be difficult to share full datasets. In contrast, aggregating small-volume summary statistics from each institute is a practical solution. Another issue is observed in different high-tech companies that hold background and behavioral information on thousands of millions of individuals, but are reluctant to share data for common research topics in part due to huge communication costs.

\subsection{Problem Setup and Contributions}
\label{sec:problem-setup}
To formalize the problem considered throughout the paper, suppose we observe a sequence of linear models
\begin{equation}\nonumber
  \y^i = \X^i \bm\beta^i + \z^i,
\end{equation}
where the design $\X^i \in \R^{n_i \times p}$ and the response $\y^i \in \R^{n_i}$ are collected by the $i$th group, the error term $\z^i$ has \iid $\mathcal{N}( 0, \sigma_i^2)$ entries, and the signal $\bm\beta^i \in \R^p$ may vary across different groups. Keeping in mind that a (sufficiently) strong signal in one of the groups is adequate to declare significance in the meta-analysis finding, we are interested in any feature $j$ that obeys $\beta_j^i \ne 0$ for at least one $i$; for any model selection procedure returning a set of discoveries $\widehat{S} \subset \{1, \ldots, p\}$, the false discovery proportion (FDP) is defined as 
\begin{equation}\label{eq:fdr}
\mbox{FDP} =  \frac{ \# \left\{1 \le j \le p: j \in \widehat{S} ~\mbox{and}~ \beta^i_j =0 ~\mbox{for all}~ i \right\} } {\max \{ |\widehat S |, 1 \} },
\end{equation}
and FDR is the expectation of FDP. We assume that each group has only access to its own data, that is, $(\y^i, \X^i)$, and reports summary statistics encoded in about $O(p)$ bits to a coordinating center. In this protocol, we aim to achieve the exact FDR control by only making use of the information received at the center. Our approach, referred to as \textit{knockoff aggregation}, is built on top of the knockoffs framework introduced by Barber and Cand\`es \cite{knockoff}. The knockoff filter remarkably achieves exact FDR control in the finite sample setting for a single linear model whenever the number of variables is no more than the number of observations. The validity of the method does not depend on the amplitude or sparsity of the unknown signal, or any knowledge of the noise variance. In sharp contrast, the BHq is only known to control the FDR for sequence models under very restricted correlation structures \cite{benjamini2001} apart from the independent case \cite{BH1995}. 

Some appealing features of the knockoff aggregation are listed as follows. Inherited from the knockoffs, our method also controls the FDR exactly for general design matrices in a non-asymptotic manner and does not require any knowledge of the noise variances of linear models. Apart from these inheritances, knockoff aggregation provides more refined information on the significance of each hypothesis by aggregating many independent copies of the summary statistics, resembling the multiple knockoffs as briefly mentioned in \cite{knockoff}. This property not only improves power by amplifying the signal, but also allows control of a generalized FDR which incorporates randomized decision rules. Due to the one-shot nature, this method only costs $O(p \cdot \#\mbox{linear models})$ bits in communication up to a logarithmic factor used in quantizing scalar summary statistics. We also propose a simple example where this rate of communication complexity is nearly optimal from an information-theoretic point of view.

  
\section{Preliminaries}
\label{sec:background-1}
In this section, we give a concise exposition of the knockoff filter \cite{knockoff}. Consider
\begin{equation}\nonumber
\y = \X \bm\beta + \z,
\end{equation}
where the design $\X$ is $n$ by $p$ and noise term $\z$ consists of $n$ \iid $\mathcal{N}(0, \sigma^2)$ entries. The knockoffs framework assumes the number of variables $p$ is no more than the number of measurements $n$ and the design matrix $\X$ has full rank. This is used to ensure model identifiability since otherwise there exists a non-trivial linear combination of the $p$ features $\X_j$ that sums to zero. Moreover, we normalize each column: $\|\X_j\|_2 = 1$ for all $1 \le j \le
p$.  

This method starts with constructing the knockoff features $\widetilde \X \in \R^{n \times p}$ that satisfies
\begin{equation}\label{eq:ko_symm}
\widetilde{\X}^{\T}\widetilde{\X} = \X^\T \X, \quad \X^\T \widetilde{\X} = \X^\T \X - \operatorname{diag}(\bm s),
\end{equation}
where $\bm s \in \R^p$ has nonnegative entries. To understand the
constraints, observe that the first equality forces $\widetilde \X$ to mimic the correlation structure of $\X$. The second equality further requires any original-knockoff pair $\X_j, \widetilde\X_j$ have the same correlation with all the other $2p-2$ features. In a nutshell, the purpose of knockoff design is to manually construct a control group as compared to the original design $\X$.

The next step is to generate statistics for every original-knockoff pair. Denote by $\ko = [\X, \widetilde \X] \in \R^{n \times 2p}$, the
augmented design matrix. The reference paper suggests choosing the Lasso on the augmented design as a pilot estimator:
\begin{equation}\nonumber
\widehat{\bm\beta}(\lambda) = \underset{\bm b \in \R^{2p}}{\operatorname{argmin}} ~ \frac12 \|\y - \ko \bm b\|_2^2 + \lambda\|\bm b\|_1.
\end{equation} 
Then, let $Z_j = \sup\{\lambda: \widehat\beta_j(\lambda) \ne 0\}$. Similarly, define $\widetilde Z_j$ for the knockoff variable $\widetilde\X_j$. Then, a recommended choice of the knockoff statistics are (different notation is used for the ease of exposition)
\[
W_j = \max\{Z_j, \widetilde Z_j \}, \quad \chi_j = \sgn(Z_j - \widetilde{Z}_j),
\]
where $\operatorname{sgn}(x) = -1, 0, 1$ depending on whether $x < 0, x= 0, x > 0$ respectively. As a matter of fact, many alternative knockoff statistics can be used instead, as emphasized in the reference paper. For instance, $W_j$ can take the form of any symmetric function of $Z_j$ and $\widetilde Z_j$. Furthermore, the use of the pilot estimator is not necessarily confined to the Lasso; alternatives include least-squares, least angle regression \cite{lars}, and any likelihood estimation procedures with a symmetric penalty (see e.g. \cite{SCAD,elasticnet,slope}).

The following lemma, due to \cite{knockoff}, is essential for the proof of FDR control of our knockoff aggregation. As clear from \eqref{eq:fdr}, we call $j$ a true null when $\beta_j = 0$ and a false null otherwise.

\begin{lemma}\label{lm:knockoff_key}
Conditional on all false null $\chi_j$ and all $W_j$, all true null $\chi_j$ are jointly independent and uniformly distributed on $\{-1,1\}$.
\end{lemma}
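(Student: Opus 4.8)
The plan is to reduce the lemma to a single exchangeability symmetry. For any subset $A$ of the true nulls, let $\ko^{(A)}$ be the augmented design obtained from $\ko=[\X,\widetilde\X]$ by interchanging $\X_j$ with $\widetilde\X_j$ for every $j\in A$, and let $(W^{(A)},\chi^{(A)})$ be the knockoff statistics recomputed from $\ko^{(A)}$ and $\y$. I will establish two facts. \emph{(i) Equivariance:} $W^{(A)}_j=W_j$ for all $j$, $\chi^{(A)}_k=\chi_k$ for $k\notin A$, and $\chi^{(A)}_j=-\chi_j$ for $j\in A$. \emph{(ii) Invariance in law:} $(W^{(A)},\chi^{(A)})\overset{d}{=}(W,\chi)$. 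Together these give, for every sign pattern $\varepsilon\in\{-1,1\}^{\#\text{true nulls}}$ (take $A=\{j\text{ true null}:\varepsilon_j=-1\}$), that $\big((W_j)_j,(\chi_j)_{j\text{ false null}},(\chi_j)_{j\text{ true null}}\big)$ has the same distribution as the vector obtained from it by replacing the true-null block $(\chi_j)_{j\text{ true null}}$ with $(\varepsilon_j\chi_j)_{j\text{ true null}}$. Since this holds for all $\varepsilon$, the conditional law of $(\chi_j)_{j\text{ true null}}$ given $(W_j)_j$ and $(\chi_j)_{j\text{ false null}}$ is invariant under arbitrary coordinate sign flips, hence uniform on $\{-1,1\}^{\#\text{true nulls}}$ --- which is precisely joint independence with each true-null $\chi_j$ uniform on $\{-1,1\}$. (The passage from the unconditional identity to the conditional statement is the routine matching of $\P\big(((W_j)_j,(\chi_j)_{\text{false null}})\in B,\ (\chi_j)_{\text{true null}}=v\big)$ with the same probability at $v$ replaced by $\varepsilon\odot v$.)

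Fact (i) is deterministic. The Lasso fit is the minimizer of $\tfrac12\|\y\|_2^2-(\ko^\T\y)^\T\bm b+\tfrac12\bm b^\T(\ko^\T\ko)\bm b+\lambda\|\bm b\|_1$, so $\widehat{\bm\beta}(\lambda)$, and therefore each $Z_j,\widetilde Z_j,W_j,\chi_j$, is a fixed function of the pair $(\ko^\T\y,\ko^\T\ko)$. Column interchanging is right-multiplication of $\ko$ by a permutation matrix $P_A$; substituting $\bm b=P_A\bm c$ in the objective for $\ko^{(A)}=\ko P_A$ and using $\|P_A\bm c\|_1=\|\bm c\|_1$ shows the entire Lasso path for $\ko^{(A)}$ equals $P_A^\T$ applied to the Lasso path for $\ko$. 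Hence the path coordinates $Z_j$ and $\widetilde Z_j$ are swapped exactly for $j\in A$, and $W_j=\max\{Z_j,\widetilde Z_j\}$, $\chi_j=\sgn(Z_j-\widetilde Z_j)$ transform as stated.

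Fact (ii) is where the knockoff construction is used. By the same bookkeeping, $(W^{(A)},\chi^{(A)})$ is the \emph{same} function, as in (i), of the pair $\big(P_A^\T\ko^\T\y,\ P_A^\T(\ko^\T\ko)P_A\big)$, so it suffices to show this pair equals $(\ko^\T\y,\ko^\T\ko)$ in distribution. First, $P_A^\T(\ko^\T\ko)P_A=\ko^\T\ko$: by \eqref{eq:ko_symm} and $\|\X_j\|_2=1$, for $k\neq j$ the four inner products among $\{\X_j,\widetilde\X_j\}$ and $\{\X_k,\widetilde\X_k\}$ all equal $(\X^\T\X)_{jk}$, and the $2\times2$ Gram block mixing $\X_j$ and $\widetilde\X_j$ is symmetric, so interchanging $\X_j\leftrightarrow\widetilde\X_j$ changes no entry of $\ko^\T\ko$; applying this for all $j\in A$ gives the claim. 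Second, $\ko^\T\y\sim\mathcal N(\ko^\T\X\bm\beta,\ \sigma^2\ko^\T\ko)$; its covariance is $P_A$-invariant by the previous sentence, and its mean is $P_A$-invariant \emph{because} $A$ consists of true nulls --- the $j$-th and $(j{+}p)$-th coordinates of $\ko^\T\X\bm\beta$ differ only by $s_j\beta_j$, which vanishes when $\beta_j=0$. A Gaussian vector whose mean and covariance are both fixed by $P_A$ satisfies $P_A^\T(\ko^\T\y)\overset{d}{=}\ko^\T\y$, and combining this with the deterministic equality for $\ko^\T\ko$ yields (ii).

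The one genuinely delicate point is the null event $\{Z_j=\widetilde Z_j\}$ for a true null $j$, on which $\chi_j=0$ and the stated conclusion is literally false; it is discarded either by the genericity assumption that no two Lasso knots coincide (which holds almost surely when $\z$ has a density) or by breaking ties with an independent fair coin, neither of which disturbs the swapping argument. Apart from this, the substantive content is the verification, via \eqref{eq:ko_symm}, that interchanging a true-null original/knockoff pair is a measure-preserving operation on the data --- everything else is bookkeeping.
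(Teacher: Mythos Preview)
Your argument is correct and is precisely the column-swapping exchangeability proof from \cite{knockoff} that the paper cites without reproducing: the paper gives no proof of its own beyond the one-line remark that the lemma ``follows from the delicate symmetry between $\X_j$ and its knockoff $\widetilde\X_j$, which is guaranteed by the construction \eqref{eq:ko_symm},'' and your Facts~(i) and~(ii) are exactly the formalization of that symmetry. Your handling of the tie event $\{Z_j=\widetilde Z_j\}$ is also appropriate and matches how the original knockoff paper treats it.
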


This simple lemma follows from the delicate symmetry between $\X_j$ and its knockoff $\widetilde\X_j$, which is guaranteed by the construction \eqref{eq:ko_symm}. The result implies that each $\chi_j$ can be
interpreted as a one-bit $p$-value, in the sense that it takes $1$ or $-1$ with equal probability if $\beta_j = 0$. In the case of large $|\beta_j|$, we shall expect that $\chi_j$ is more likely to take $1$ since the original feature $\X_j$ has more odds to enter the Lasso path earlier than $\widetilde\X_j$. This lemma also suggests ordering the hypotheses based on the magnitude of $W_j$ so that hypotheses that are more likely to be rejected would be tested earlier. Good ordering of hypotheses is a key element for improving power in sequential hypothesis testing (see e.g. \cite{foster2008alpha,javanmard2015}).


\section{Aggregating the Knockoffs}
\label{sec:distr-knock}
We recap the problem:  Observe a sequence of decentralized linear regression models with the same set of hypotheses of interest,
\begin{equation}\label{eq:decentralizedmodel}
\y^i = \X^i \bm\beta^i + \z^i
\end{equation}
for $1 \le i \le m$. The design matrix $\X^i$ is $n_i$ by $p$ and $\z^i$ consists of \iid centered normals. The error terms $\z^i$ are jointly independent and are allowed to have \textit{different} variance levels. The number of observed models $m$ is not necessarily deterministic, but must be independent of the randomness of all $\z^i$. We are interested in simultaneously testing
\[
H_{0,j}: \beta^1_j = \beta^2_j = \cdots = \beta^m_j = 0
\]
(that is, there is no effect of feature $j$ in all studies) versus
\[
H_{a,j}: \mbox{at least one } \beta^i_j \ne 0
\]
for $1 \le j \le p$. To fully utilize the knockoffs framework, we assume $n_i \ge p$ for all $i \ge 1$. 

Our aggregation starts by running the knockoff filter with an arbitrary pilot estimator for each linear model \eqref{eq:decentralizedmodel}, which provides us with the ordering statistics $W^i_1, \ldots W^i_p$ and the one-bit $p$-values $\chi^i_1, \ldots, \chi^i_p$. Then, for each $1 \le j \le p$, we aggregate $W^1_j, \ldots, W^m_j$ to produce $W_j$ that measures the rank of the $j$th hypothesis: the larger $W_j$ is, the earlier the hypothesis $H_{0, j}$ is tested. Let this summary statistic take the form $W_j = \Gamma(W_j^1, \ldots, W^m_j)$ for some nonnegative measurable function $\Gamma$ defined on $\R_+^m$. Recognizing that a large $W_j^i$ provides evidence of significant rank of the corresponding hypothesis, we are particularly interested in summary functions $\Gamma$ that are non-decreasing in each coordinate. No further conditions of $\Gamma$ are required. Examples include $\Gamma(x_1, \ldots, x_m) = \max\{x_1, \ldots, x_m\}, \, \Gamma(x_1, \ldots, x_m) = \mbox{the sum (or product) of the } r \mbox{ largest of } x_1, \ldots x_m$ for some $1 < r \le m$, and $\Gamma(x_1, \ldots, x_m) = \sum_{i=1}^m n_i x_i$. The first two examples are symmetric in the $W$-statistics and the last one incorporates sizes of the $m$ models.

With the ordering statistics $W_j$ in place, we move to define the aggregated $\chi$-statistics by making use of the one-bit $\chi_j^i$:
\begin{equation}\nonumber
\chi_j = \frac{m}{2} + \frac12 \sum_{i=1}^m \chi^i_j,
\end{equation}
which is simply the number of $+1$ of $\chi_j^1, \ldots, \chi^m_j$. The motivation behind this construction is simple. That is, the more winnings of the original feature $\bm X_j^i$ over its knockoffs $\widetilde{\bm X}_j^i$, the stronger evidence that $\beta_j^i$ is nonzero. As will be shown in Lemma~\ref{lm:key_lemma}, under the null $\beta_j^1 = \cdots = \beta_j^m = 0$, this aggregated $\chi_j$ follows a simple binomial distribution so that it can be easily translated into a refined $p$-value. 

In passing, the content of this section by far is summarized in Algorithm~\ref{algo:dist}.

\begin{algorithm}[H]
\caption{Running the knockoff filter in parallel}
\label{algo:dist}
\begin{algorithmic}[1]
\REQUIRE  $\X^1, \ldots, \X^m, \y^1, \ldots, \y^m$ and a summary function $\Gamma$.
\STATE Run the knockoff filter for each model and get $\chi^1_j, \ldots, \chi^m_j$ and $W_j^1, \ldots, W_j^m$.
\STATE (One-shot communication) Let $\chi_j \leftarrow  \frac{m}{2} + \frac12 \sum_{i=1}^m \chi^i_j$ and $W_j \leftarrow \Gamma(W^1_j, \ldots, W^m_j)$.
\end{algorithmic}
\end{algorithm}

\subsection{Controlling the Weighted FDR}
\label{sec:contr-fdr-other}
Having defined the aggregated knockoff statistics, we now turn to control a generalized FDR. We call it the weighted false discovery rate ($\wfdr$) which includes the original FDR as a special example. In lieu of the accept-or-reject decision rule, we introduce a randomized decision rule that assigns each hypothesis $H_{0, j}$ a number $\omega_j$ between 0 and 1. The closer $\omega_j$ is to 1, the more confidence we have in rejecting the hypothesis $H_{0, j}$. The definition the weighted FDR is given as follows.
\begin{definition}
Given a randomized decision rule $\bm\omega \in [0, 1]^p$, define the weighted false discovery proportion as
\[
\wfdp = \frac{\sum_{j=1}^p \omega_j \bm{1}_{\nullx j}}{ \sum_{j=1}^p \omega_j}
\]
if $\sum_{j=1}^p \omega_j > 0$ and otherwise $\wfdp = 0$, where $\bm{1}_{\nullx j} = 1$ if the hypothesis $H_{0, j}$ is a true null and otherwise 0. The $\wfdr$ is the expectation of $\wfdp$.
\end{definition}
If the weights $\omega_j$ take only $0, 1$, then the $\wfdr$ reduces to the vanilla FDR. In general, $\omega_j$ can be interpreted as the probability, or \textit{confidence}, of randomly rejecting $H_{0, j}$. As a special case, rejecting a hypothesis with confidence 0 is equivalent to accepting it. The motivation for this generalized FDR is simple: The accept-or-reject rule behaves like a hard rule that may make completely different decisions for very close $p$-values, whereas randomization smoothes out this undesirable artifact. From a practical point of view, this generalized FDR has the potential to find applications in large-scale Internet experiments where randomized decisions occurred frequently. Randomization of testing also comes naturally from an empirical Bayesian framework (see e.g. \cite{efron}).

As mentioned earlier, the particular form of the refined $\chi$-statistics is highly motivated by the fact that $\chi_j$ under the null hypotheses (i.e. $\beta_j^1= \cdots = \beta^m_j = 0$) are simply \iid binomial random variables, no matter how complicated the joint distribution of $W_j$ is. The following lemma formalizes this point, whose proof is just a stone away from Lemma~\ref{lm:knockoff_key}.
\begin{lemma}\label{lm:key_lemma}
Conditional on all false null $\chi_j$ and all $W_j$, all true null $\chi_j$ are jointly independent and have binomial distribution $B(m, 1/2)$.
\end{lemma}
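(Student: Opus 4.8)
The plan is to derive this lemma from Lemma~\ref{lm:knockoff_key} applied to each of the $m$ models separately, together with the mutual independence of the models. Since $m$ is assumed independent of all the noise vectors $\z^i$, and hence of every $\chi^i_j$ and $W^i_j$ (which are functions of $\z^i$ with $\X^i,\bm\beta^i$ fixed), it suffices to prove the claim conditionally on $\{m = m_0\}$ for an arbitrary fixed $m_0$; throughout I would treat $m = m_0$ as deterministic. For each $1 \le i \le m_0$ let $\mathcal{F}_i$ be the $\sigma$-algebra generated by $W_1^i,\ldots,W_p^i$ together with those $\chi_j^i$ for which $j$ is a \emph{false null in model $i$}, i.e.\ $\beta_j^i \ne 0$. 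Lemma~\ref{lm:knockoff_key}, applied to the $i$th linear model, then says that conditionally on $\mathcal{F}_i$ the statistics $\{\chi_j^i : j \text{ true null in model } i\}$ are jointly independent and uniform on $\{-1,1\}$.

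First I would pass to the larger conditioning $\sigma$-algebra $\mathcal{H} := \bigvee_{i=1}^{m_0} \sigma\big(W_1^i,\ldots,W_p^i,\, \{\chi_j^i : j \text{ is an aggregated false null}\}\big)$, which conditions on every $W$-statistic and, for each model, on every $\chi_j^i$ whose index $j$ is a null of the \emph{aggregated} problem (i.e.\ $H_{a,j}$ holds). The key observation is that an aggregated true null $j$ has $\beta_j^i = 0$ for \emph{all} $i$, so it is automatically a true null within every individual model; hence $\{\chi_j^i : j \text{ aggregated true null}\} \subseteq \{\chi_j^i : j \text{ true null in model } i\}$. Relative to $\mathcal{F}_i$, the $\sigma$-algebra $\mathcal{H}$ only adds conditioning on (a) further members of the very same i.i.d.\ uniform family — namely the $\chi_j^i$ with $j$ an aggregated false null that is nonetheless a true null in model $i$ — and (b) quantities attached to the other models $i' \ne i$, which are independent of model $i$. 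Neither disturbs the conclusion, so $\{\chi_j^i : j \text{ aggregated true null}\}$ is still i.i.d.\ uniform on $\{-1,1\}$ given $\mathcal{H}$. Using independence of the $m_0$ models, the whole doubly indexed family $\{\chi_j^i : 1 \le i \le m_0,\ j \text{ aggregated true null}\}$ is jointly independent and uniform on $\{-1,1\}$ conditionally on $\mathcal{H}$. Consequently, for each aggregated true null $j$, the statistic $\chi_j = \tfrac{m_0}{2} + \tfrac12\sum_{i=1}^{m_0}\chi_j^i$ equals the number of $+1$'s among $\chi_j^1,\ldots,\chi_j^{m_0}$, which is distributed as $B(m_0,1/2)$; and these are jointly independent over $j$ because they are assembled from disjoint blocks of the independent family.

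Finally I would discharge the extra conditioning. Writing $\mathcal{G} := \sigma\big(\{W_j\}_j,\, \{\chi_j : j \text{ aggregated false null}\}\big)$, one checks $\mathcal{G} \subseteq \mathcal{H}$: each $W_j = \Gamma(W_j^1,\ldots,W_j^{m_0})$ is a measurable function of $\mathcal{H}$-measurable variables, and each aggregated-false-null $\chi_j = \tfrac{m_0}{2} + \tfrac12\sum_{i=1}^{m_0}\chi_j^i$ is $\mathcal{H}$-measurable because every $\chi_j^i$ in that sum carries an aggregated-false-null index $j$. Since the conditional law of the vector of aggregated-true-null $\chi_j$'s given $\mathcal{H}$ is a \emph{fixed} product of $B(m_0,1/2)$ distributions that does not depend on the realization of $\mathcal{H}$, the tower property shows the conditional law given the sub-$\sigma$-algebra $\mathcal{G}$ is the same, which is exactly the assertion of the lemma (conditionally on $\{m=m_0\}$, and hence for the random $m$).

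The main obstacle I expect is bookkeeping rather than probability: one must resist conditioning directly on the aggregated-false-null $\chi_j$'s, since a feature that is a false null in the aggregated sense may still be a true null inside some individual model, so such conditioning is not transparently covered by Lemma~\ref{lm:knockoff_key}. Conditioning instead at the level of the individual $\chi_j^i$ (the definition of $\mathcal{H}$) and then invoking the elementary fact that conditioning on part of an i.i.d.\ family leaves the remainder i.i.d.\ is what makes the reduction go through; the only other point needing explicit care is that one may fix $m$ first, which is legitimate by its assumed independence from the noise.
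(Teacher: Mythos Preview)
Your proposal is correct and follows essentially the same route as the paper: apply Lemma~\ref{lm:knockoff_key} model by model, use the independence of the $m$ models to get the doubly indexed family $\{\chi_j^i\}$ i.i.d.\ uniform on $\{-1,1\}$ conditionally on the finer $\sigma$-algebra $\mathcal{H}$, and then descend to the aggregated conditioning $\mathcal{G}\subseteq\mathcal{H}$ via the tower property. Your treatment is in fact more careful than the paper's on one point---the paper conditions directly on $\bm\chi^i_S$ with $S$ the \emph{aggregated} false-null set and invokes Lemma~\ref{lm:knockoff_key} without comment, implicitly relying on the observation you spell out (that $j\in S$ may still be a true null in model~$i$, and conditioning on part of an i.i.d.\ family leaves the rest i.i.d.).
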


Therefore, the refined $p$-value for testing $H_{0,j}$ is naturally given by 
\[
P_j = \frac1{2^m}\sum_{i=\chi_j}^m {m \choose i},
\]
which, by definition, is stochastically smaller than the uniform distribution on $[0, 1]$. 

Now we turn to introduce Algorithm~\ref{algo:weight} which is a generalization of the original knockoff+ filter via incorporating a confidence function $\Omega$. We call $\Omega : [0, 1] \goto [0, 1]$ a confidence function if $\Omega$ is non-increasing and obeys
\[
\lim_{x \rightarrow 0+} \Omega(x) = \Omega(0) = 1, \quad \lim_{x \rightarrow 1-} \Omega(x) = \Omega(1) = 0.
\]
This function is used to provide weights $\bm\omega$ in rejecting the hypotheses. In the special case of $\Omega(x) = \bm{1}_{x \le c}$ for some $0 < c < 1$, this algorithm reduces to the \textit{Selective SeqStep+} in \cite{knockoff}. Interested readers are referred to \cite{barber2015} for generalizations in a different direction.
Below, let $U$ be uniformly distributed on $[0, 1]$.

We present the main result as follows, which generalizes Theorem 3 of \cite{knockoff}. The control of $\wfdr$ in the finite sample setting holds for any summary function $\Gamma$ and confidence function $\Omega$.

\begin{theorem}\label{thm:fdr_control}
Combining Algorithm~\ref{algo:dist} and Algorithm~\ref{algo:weight} gives
\[
\wfdr \le q.
\]

\end{theorem}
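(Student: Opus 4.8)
\medskip
\noindent\textbf{Proof proposal.}
The plan is to condition away every source of randomness except the true-null sign information, which collapses the claim to the error analysis of a one-dimensional Selective SeqStep+-type procedure, and then to run an optional-stopping (backward supermartingale) argument in the spirit of Barber and Cand\`es. Concretely: since $m$ is independent of all the noise vectors $\z^i$, it suffices to argue conditionally on $m$; fix $m$ and condition further on the $\sigma$-field $\mathcal F$ generated by the ordering statistics $W_1,\dots,W_p$ together with the counts $\chi_j$ at the \emph{false} nulls. By Lemma~\ref{lm:key_lemma}, given $\mathcal F$ the true-null $\chi_j$ are \iid $B(m,1/2)$ and hence independent of $\mathcal F$, so the true-null refined $p$-values $P_j=2^{-m}\sum_{i\ge\chi_j}\binom{m}{i}$ are \iid copies, independent of $\mathcal F$, and \emph{super-uniform}: $\P(P_j\le t)\le t$ for all $t\in[0,1]$, directly from the binomial-tail definition. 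Because $\Omega$ is non-increasing, super-uniformity gives, for each true null,
\[
\E\!\left[\,\Omega(P_j)\mid\mathcal F\,\right]\le c^\star,\qquad
\E\!\left[\,1-\Omega(P_j)\mid\mathcal F\,\right]\ge 1-c^\star,
\qquad c^\star:=\int_0^1\Omega(x)\,\d x,
\]
and this pair of inequalities --- with $c^\star$ in the role of the parameter $c$ of Selective SeqStep+ --- is the only probabilistic input the rest of the proof uses.

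Next I would set up the reduction. Conditionally on $\mathcal F$ the ranking of the hypotheses by decreasing $W_j$ is deterministic, so relabel them $1,\dots,p$ in that order; we are then exactly in the situation Algorithm~\ref{algo:weight} is designed for --- a fixed list of hypotheses whose true-null scores $\Omega(P_j)$ are \iid and independent of the ordering. Writing $R(k):=\sum_{j\le k}\Omega(P_j)$, $V^-(k):=\sum_{j\le k}(1-\Omega(P_j))$, $V(k):=\sum_{j\le k,\,\nullx j}\Omega(P_j)$, and letting $A$ denote the additive offset built into Algorithm~\ref{algo:weight} (equal to $1$ in the simplest version, the analogue of the ``$1+$'' in knockoff+), decompose the weighted FDP at cutoff $k$ as
\[
\wfdp(k)=\frac{V(k)}{A+V^-(k)}\ \cdot\ \frac{A+V^-(k)}{R(k)}.
\]
By construction the stopping rule of Algorithm~\ref{algo:weight} picks the cutoff $\widehat k$ so that the second factor is at most $q(1-c^\star)/c^\star$ \emph{deterministically}; it therefore remains to show $\E[\,V(\widehat k)/(A+V^-(\widehat k))\mid\mathcal F\,]\le c^\star/(1-c^\star)$.

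For that I would run the cutoff backward, from $p$ down to $\widehat k$, in the filtration that at stage $k$ reveals $\{\Omega(P_j):j\ge k\}$ and the positions of all false nulls but not which of the remaining (top) indices are null. Since the true-null scores are \iid and satisfy the two moment bounds above, one checks that $k\mapsto V(k)/(A+V^-(k))$ is a supermartingale in this backward time; because $\widehat k$ is a stopping time for this filtration (the rule scans the list from the top), optional stopping yields the required bound on its conditional expectation, and multiplying by the deterministic bound on the second factor gives $\E[\wfdp(\widehat k)\mid\mathcal F]\le q$. Averaging over $\mathcal F$ and then over $m$ finishes the proof. The auxiliary uniform $U$ merely converts the confidences $\Omega(P_j)$ into the genuine randomized decisions of Algorithm~\ref{algo:weight} and leaves $\E[\wfdp]$ unchanged, so it never enters the bound.

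I expect the supermartingale verification in the last step to be the main obstacle: when $\Omega$ is an arbitrary non-increasing function rather than a $0/1$ indicator, one must check that the offset $A$ is calibrated precisely so as to absorb the slack coming from the inequality $\P(P_j\le t)\le t$ (rather than equality), that the discreteness of the binomial $p$-values causes no trouble, and that the randomization by $U$ is indeed harmless. Everything else --- the conditioning of the first step and the algebraic splitting of $\wfdp$ --- is bookkeeping, and once the filtration and the offset are pinned down the optional-stopping step follows the template of the proof of Theorem~3 in \cite{knockoff}.
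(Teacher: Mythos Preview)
Your overall strategy matches the paper's proof: condition on $m$, invoke Lemma~\ref{lm:key_lemma} to freeze the ordering and the false-null $p$-values, split $\wfdp$ into one factor controlled deterministically by the stopping rule and another bounded by optional stopping applied to a backward supermartingale. Two details in your execution do not go through as written, however.

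First, the process $k\mapsto V(k)/\bigl(A+V^-(k)\bigr)$ with $V^-(k)=\sum_{j\le k}(1-\Omega(P_j))$ summed over \emph{all} indices is not a backward supermartingale: at a false-null index $k$ the numerator $V(k)$ stays fixed while the denominator drops by the $\mathcal F$-measurable amount $1-\Omega(P_k)\ge 0$, so the ratio \emph{increases} deterministically. The paper's Lemma~\ref{lm:martinglae} avoids this by taking $V^-(k)$ to be the sum over \emph{true nulls only}; the link to the stopping rule is then made through the one-line inequality $\sum_{j\le k,\ \nullx j}(1-\Omega(P_j))\le\sum_{j\le k}(1-\Omega(P_j))$, which your decomposition skips. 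Second, the filtration you describe---revealing $\{\Omega(P_j):j\ge k\}$---does not make $\widehat k$ a stopping time: the selection condition at any $k'\ge k$ involves $\sum_{j\le k'}\Omega(P_j)$, which depends on the still-hidden true-null $P_j$ with $j<k$. The paper's filtration instead carries the \emph{partial sums} $V^+(k),V^+(k+1),\dots,V^+(p)$ (together with all false-null $P_j$), which is precisely enough to decide the stopping event and yet coarse enough that the exchangeability argument behind Lemma~\ref{lm:martinglae} remains valid. Finally, the terminal bound $\E\,M(p)\le c^\star/(1-c^\star)$ is not a consequence of the first-moment inequalities $\E\,\Omega(P_j)\le c^\star$ alone; it requires the separate inverse-Jensen computation of Lemma~\ref{lm:mgle_start}, which exploits $\Omega(P_j)\in[0,1]$ and super-uniformity of $P_j$.
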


\begin{algorithm}[H]
\caption{Knockoff filter with aggregated statistics from Algorithm~\ref{algo:dist}}
\label{algo:weight}
\begin{algorithmic}[1]
\REQUIRE  $\chi_1, \ldots, \chi_p, W_1, \ldots, W_p$ from Algorithm~\ref{algo:dist}, nominal level $q \in (0, 1)$, and confidence function $\Omega$.
\STATE Compute $P_j = \frac1{2^m}\sum_{i=\chi_j}^m {m \choose i}$.
\STATE Order hypotheses according to the magnitude of the $W$-statistics: $W_{\rho(1)} \ge W_{\rho(2)} \ge \cdots \ge W_{\rho(p)}$, where $\rho(\cdot)$ is a permutation of $1, \ldots, p$.
\STATE  Let $\widehat k$ be
\[
 \max \left\{k:  \frac{1 + \sum_{j=1}^k( 1 - \Omega(P_{\rho(j)}) )}{\sum_{j=1}^k \Omega(P_{\rho(j)})} \le  \frac{q}{\E \, \Omega(U)} - q  \right\},
\]
with the convention that $\max \emptyset = -\infty$.
\STATE Reject all hypotheses $H_{0, \rho(j)}$ for $j \le \widehat{k}$ with weight (confidence) $\omega_j = \Omega(P_{\rho(j)})$.
\end{algorithmic}
\end{algorithm}

The proof of the theorem relies on two lemmas stated below, which are parallel to Lemma 4 of \cite{knockoff}. We defer the proofs of these two lemmas to the Appendix.

\begin{lemma}\label{lm:martinglae}
Let
\[
V^+(k) = \sum_{j=1}^k \Omega(P_j) \bm{1}_{\nullx j}, \quad V^-(k) = \sum_{j=1}^k (1 - \Omega(P_j)) \bm{1}_{\nullx j}.
\]
Then,
\[
M(k) = \frac{V^+(k)}{1 + V^-(k)}
\]
is a super-martingale running backward in $k$ with respect to the filtration $\mathcal{F}_k$, which only knows all the false null $P_j$, and $V^+(k), V^+(k+1), \ldots, V^+(p)$.
\end{lemma}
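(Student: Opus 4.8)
The plan is to reduce the claim, by one round of conditioning, to a one-variable convexity estimate. I would first pass to the enlarged $\sigma$-algebra $\mathcal F_k^\star := \mathcal F_k \vee \mathcal G$, where $\mathcal G := \sigma\big(W_1,\dots,W_p,\ \{\chi_j: j \text{ false null}\}\big)$ records all the ordering statistics and all the false null signs. Since $M(k)$ is $\mathcal F_k$-measurable (knowing which indices are false null, $n_0(k) := \#\{j\le k: j\text{ true null}\}$ is fixed, so $V^-(k) = n_0(k) - V^+(k)$ and hence $M(k)$ is a function of $V^+(k)$) and $\mathcal F_k \subseteq \mathcal F_k^\star$, the tower property reduces everything to showing $\E[M(k-1)\mid\mathcal F_k^\star]\le M(k)$. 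The advantage of conditioning on $\mathcal G$ is that, by Lemma~\ref{lm:key_lemma}, the true null $\chi_j$ become i.i.d.\ $B(m,1/2)$, so the true null weights $\omega_j := \Omega(P_j)$ become i.i.d.\ with a common law supported in $[0,1]$; moreover $\mathcal G$ fixes the sorting permutation, so I may assume the indices are already ordered, $W_1\ge\cdots\ge W_p$.

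Next I would split the backward step $k\to k-1$. If index $k$ is a false null then $\bm 1_{\nullx k}=0$, so $V^+(k-1)=V^+(k)$, $V^-(k-1)=V^-(k)$, and $M(k-1)=M(k)$, with nothing to prove. If $k$ is a true null, then $V^+(k-1)=V^+(k)-\omega_k$ and $V^-(k-1)=V^-(k)-(1-\omega_k)$, so
\[
M(k-1)=\frac{V^+(k)-\omega_k}{V^-(k)+\omega_k}=:g(\omega_k).
\]
The only randomness left in $M(k-1)$ given $\mathcal F_k^\star$ is $\omega_k$: indeed $\mathcal F_k^\star$ reveals, through the successive differences of $V^+(k),V^+(k+1),\dots,V^+(p)$, the values $\omega_j$ for true null $j>k$ (independent of everything relevant) together with the single linear statistic $V^+(k)=\sum_{j\le k,\ \text{true null}}\omega_j$. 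Hence, conditionally on $\mathcal F_k^\star$, the true null $(\omega_j)_{j\le k}$ are i.i.d.\ conditioned on their sum being $V^+(k)$; this conditional law is exchangeable, and since $k$ is one of these $n_0(k)$ indices we get $\E[\omega_k\mid\mathcal F_k^\star]=V^+(k)/n_0(k)$.

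To finish, I would use that $g$ is convex on $[0,1]$: one checks $g''(\omega)=2\big(V^+(k)+V^-(k)\big)\big(V^-(k)+\omega\big)^{-3}\ge 0$, the only degenerate case $V^-(k)=0$ forcing $\omega_k=1$ almost surely (conditionally) and being checked by hand. A convex function on an interval lies below its chord, so $g(\omega)\le g(0)+\big(g(1)-g(0)\big)\omega$ for $\omega\in[0,1]$; taking $\E[\cdot\mid\mathcal F_k^\star]$, substituting $\E[\omega_k\mid\mathcal F_k^\star]=V^+(k)/n_0(k)$, and simplifying with $V^+(k)+V^-(k)=n_0(k)$ yields
\[
\E[M(k-1)\mid\mathcal F_k^\star]\;\le\;\frac{V^+(k)}{V^-(k)}-\frac{V^+(k)}{V^-(k)\big(V^-(k)+1\big)}\;=\;\frac{V^+(k)}{1+V^-(k)}\;=\;M(k),
\]
which is exactly the required bound (the boundary step $k=1\to 0$ being trivial since $M(0)=0$).

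I expect the main obstacle to be precisely the point flagged above: the naive move---Jensen's inequality on the convex $g$---gives a \emph{lower} bound on $\E[g(\omega_k)\mid\mathcal F_k^\star]$ and is useless here; the correct argument is the chord (upper) bound on $[0,1]$ coupled with the exchangeability identity $\E[\omega_k\mid\mathcal F_k^\star]=V^+(k)/n_0(k)$, and the reason the chord bound is available at all is that the weights $\omega_j$ lie in the bounded interval $[0,1]$. A secondary point needing care is the bookkeeping showing that $\mathcal F_k^\star$ adds to $\mathcal G$ only the later weights $\omega_j$ ($j>k$) and the partial sum $V^+(k)$, so that the stated conditional law---i.i.d.\ conditioned on the sum---is correct; note also that when $\Omega=\bm 1_{\cdot\le c}$ the inequality above becomes an equality, recovering the martingale in \cite{knockoff}.
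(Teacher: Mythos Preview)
Your proposal is correct and takes essentially the same approach as the paper. The paper also reduces to the true-null case, invokes exchangeability to compute the conditional mean, and then applies the chord (``inverse Jensen'') bound for a convex function on a bounded interval; the only cosmetic difference is that the paper parametrizes by $V^+(k-1)\in[A-1,A]$ (with $A=V^+(k)$) while you parametrize by $\omega_k\in[0,1]$, which are related by $V^+(k-1)=A-\omega_k$. Your explicit enlargement of the filtration to $\mathcal F_k^\star$ and your handling of the degenerate case $V^-(k)=0$ are a bit more careful than the paper's write-up, but the substance is identical.
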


\begin{lemma}\label{lm:mgle_start}
For any integer $N \ge 1$, let $U, U_1, \ldots, U_N$ be \iid uniform random variables on $[0, 1]$. Then,
\begin{equation}\label{eq:mgle_start_val}
\E\left[ \frac{\sum_{j=1}^N \Omega(U_j) }{ 1 + \sum_{j=1}^N (1 - \Omega(U_j))} \right] \le \frac{\E \, \Omega(U)}{1 - \E \, \Omega(U)}.
\end{equation}
\end{lemma}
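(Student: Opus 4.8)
The plan is to collapse the left-hand side of \eqref{eq:mgle_start_val} to a single one-dimensional integral via the elementary identity $\frac{1}{1+T}=\int_0^1 t^{T}\,\d t$, which holds for every $T\ge 0$. Write $X_j=\Omega(U_j)$; these are \iid, take values in $[0,1]$, and have common mean $\mu:=\E\,\Omega(U)$. We may assume $\mu<1$, for otherwise the right-hand side is infinite. Set $S=\sum_{j=1}^N X_j$ and $T=\sum_{j=1}^N(1-X_j)$, so that $t^{T}=\prod_{k=1}^N t^{\,1-X_k}$. Since every quantity in sight is nonnegative, Tonelli's theorem together with the independence of the $X_j$ gives
\[
\E\!\left[\frac{S}{1+T}\right]=\int_0^1 \E\big[S\,t^{T}\big]\,\d t=N\int_0^1 h(t)\,g(t)^{N-1}\,\d t,
\]
where $g(t):=\E\,t^{\,1-X_1}$ and $h(t):=\E\big[X_1\,t^{\,1-X_1}\big]$.

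The crux is the identity $t\,g'(t)=\E\big[(1-X_1)\,t^{\,1-X_1}\big]=g(t)-h(t)$, i.e.\ $h=g-t g'$. Substituting this and integrating by parts (using $N g'g^{N-1}=(g^{N})'$, the continuity of $g$, and $g(1)=1$) makes the integral telescope:
\[
N\!\int_0^1 h\,g^{N-1}\,\d t=N\!\int_0^1 g^{N}\,\d t-\big[t\,g(t)^{N}\big]_0^1+\int_0^1 g^{N}\,\d t=(N+1)\!\int_0^1 g^{N}\,\d t-1 .
\]
Thus it suffices to prove $(N+1)\int_0^1 g(t)^{N}\,\d t\le \tfrac{1}{1-\mu}$. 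For each fixed $t\in[0,1]$ the map $s\mapsto t^{s}$ is convex on $[0,1]$, so the chord bound gives $t^{\,1-X_1}\le t+X_1(1-t)$, whence $g(t)\le \mu+(1-\mu)t$. Plugging this in and using $\int_0^1(\mu+(1-\mu)t)^{N}\,\d t=\frac{1-\mu^{N+1}}{(1-\mu)(N+1)}$ yields $(N+1)\int_0^1 g^{N}\,\d t\le \frac{1-\mu^{N+1}}{1-\mu}\le\frac{1}{1-\mu}$; subtracting $1$ produces exactly $\mu/(1-\mu)$.

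The one genuine insight is the pairing of the $\int_0^1 t^{T}\,\d t$ representation with the identity $h=g-tg'$, after which only Jensen's inequality and an elementary integral remain; I expect this to be the main obstacle, since both moves have to be guessed rather than forced. The remaining details are routine: differentiation of $g$ under the expectation is legitimate on $(0,1]$ because $(1-X_1)t^{-X_1}\le t^{-1}$ is bounded on compact subintervals, and the boundary term $[t\,g(t)^{N}]$ vanishes at $t=0$ and equals $1$ at $t=1$.
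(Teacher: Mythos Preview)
Your proof is correct and proceeds by a genuinely different route than the paper.

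The paper argues by reduction to the Bernoulli case: fixing $\sum_{j\ge 2}\Omega(U_j)$, the map $x\mapsto (x+c_1)/(c_2-x)$ is convex on $[0,1]$, so the ``inverse Jensen'' inequality lets one replace $\Omega(U_1)$ by a Bernoulli$(\alpha)$ variable with the same mean without decreasing the expectation; iterating over coordinates reduces the claim to
\[
\E\!\left[\frac{B(N,\alpha)}{1+N-B(N,\alpha)}\right]\le \frac{\alpha}{1-\alpha},
\]
which is then quoted from \cite{knockoff}. Your approach instead uses the integral representation $1/(1+T)=\int_0^1 t^{T}\,\d t$ to obtain the exact identity
\[
\E\!\left[\frac{S}{1+T}\right]=(N+1)\int_0^1 g(t)^N\,\d t-1,\qquad g(t)=\E\,t^{\,1-X_1},
\]
and then bounds $g$ pointwise via the chord inequality for the convex map $s\mapsto t^{s}$. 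The two convexity steps are in fact dual views of the same extremality: your chord bound $g(t)\le \mu+(1-\mu)t$ is precisely the statement that $g$ is maximized pointwise when $X_1$ is Bernoulli$(\mu)$, so both proofs ultimately identify Bernoulli as the worst case. What your argument buys is self-containment and a sharper conclusion: plugging the chord bound into your identity yields $\E[S/(1+T)]\le (\mu-\mu^{N+1})/(1-\mu)$, which simultaneously establishes the binomial inequality the paper has to cite. The paper's argument, on the other hand, makes the extremal-distribution structure more transparent without any analytic machinery.
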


Now, we turn to give the proof of Theorem~\ref{thm:fdr_control}. The idea of the proof is similar to that of Theorem 2 in \cite{knockoff}.
\begin{proof}[Proof of Theorem~\ref{thm:fdr_control}]
Recall that the weights $\omega_j$ are given as $\omega_j = \Omega(P_j)$ if $H_{0, j}$ is ranked among the first $\widehat k$ hypotheses processed by Algorithm~\ref{algo:weight} and otherwise $\omega_j = 0$. Due to the independence between $m$ and all $\z^i$, by a conditional argument the theorem is reduced to proving for a deterministic $m$. By Lemma~\ref{lm:key_lemma}, without loss of generality, we may assume $W_1 \ge \cdots \ge W_p$. Then we get
\begin{align*}
\wfdp \cdot \bm{1}_{\widehat k \ge 1} &= \frac{\sum_{j=1}^{\widehat k} \Omega(P_j) \bm{1}_{\nullx j}}{ \sum_{j=1}^{\widehat k} \Omega(P_j)}\\ 
&= \frac{1 + \sum_{j=1}^{\widehat k} (1 - \Omega(P_j))\bm{1}_{\nullx j}}{\sum_{j=1}^{\widehat k} \Omega(P_j)}  \cdot \frac{\sum_{j=1}^{\widehat k} \Omega(P_j) \bm{1}_{\nullx j}}{ 1 + \sum_{j=1}^{\widehat k} (1 - \Omega(P_j))\bm{1}_{\nullx j}}\\
 &\le \frac{1 + \sum_{j=1}^{\widehat k} (1 - \Omega(P_j))}{\sum_{j=1}^{\widehat k} \Omega(P_j)} \cdot \frac{\sum_{j=1}^{\widehat k} \Omega(P_j) \bm{1}_{\nullx j}}{ 1 + \sum_{j=1}^{\widehat k} (1 - \Omega(P_j))\bm{1}_{\nullx j}}\\
& \le \frac{q(1 - \E \, \Omega(U))}{\E \, \Omega(U)} \cdot  M(\widehat k).
 \end{align*}    
Recognizing that $\widehat k$ is a stopping time with respect to $\mathcal{F}$, we apply the Doob's optional stopping theorem to the super-martingale $M(k)$ in Lemma~\ref{lm:martinglae},
\begin{align*}
\wfdr &\le \frac{q(1 - \E \, \Omega(U))}{\E \, \Omega(U)} \cdot  \E \, M(\widehat k) \\
&\le \frac{q(1 - \E \, \Omega(U))}{\E \, \Omega(U)} \cdot  \E \, M(p) \\
&\le \frac{q(1 - \E \, \Omega(U))}{\E \, \Omega(U)} \cdot  \frac{\E \, \Omega(U)}{1 - \E \, \Omega(U)} = q,
\end{align*}
where the last inequality follows from Lemma~\ref{lm:mgle_start} and the observation that $\Omega(P_j)$ is stochastically dominated by $\Omega(U_j)$.

\end{proof}

\subsection{Controlling Other Error Rates}
\label{sec:contr-other-error}
While it is out of the scope of the present paper, we would like to briefly point out that the knockoff aggregation can be applied to control other type-I error rates, including the $k$-familywise error rate ($k$-FWER) \cite{hh1988}, $\gamma$-FDP \cite{genovese2004stochastic}, and per-family error rate (PFER) \cite{gordon2007}. These error rates have different interpretations from the FDR and are more favorable in certain applications. Interested readers are referred to a recent work \cite{knockoffFWER} where some attractive features of the knockoffs framework are translated into a novel procedure for provably controlling the $k$-FWER and PFER. With more refined information, the knockoff aggregation has the potential to improve power while still controlling these error rates.


\section{Communication Complexity}
\label{sec:optimality-algo}
The knockoff aggregation is communication efficient due to its one-shot nature. For each decentralized linear model, the message sent to the coordinating center is merely the sign information $\chi_j^i$ and the ordering information $W_j^i$. This piece of information can be encoded in $O(p \, \mathtt{poly}(\log p))$ bits, where the polylogarithmic factor is used in quantizing each $W_j^i$ depending on the accuracy required. Hence, the total bits of communication is $\widetilde O(mp)$. We can further get rid of this logarithmic factor by forcing $W_j^i$ to take only $0$ or $1$, respectively, depending on whether the original $W_j^i$ is below the median of the original $W_1^i, \ldots, W_p^i$ or not.

It would be interesting to get a lower bound on the total communication cost required to control the FDR while maintaining a decent power. A trivial bound as such is $\Omega(p)$ since it needs $p$ bits to fully characterize the support set of $\bm\beta^i$ (in this section $\Omega(\cdot)$ is the Big Omega notation in complexity theory, instead of the confidence function). In general, this bound is unachievable since the summary statistics from each decentralized model are obtained by using only local information. To shed light on this, we provide a simple but illuminating example of \eqref{eq:decentralizedmodel} where $\Theta(mp)$ is the optimal communication cost in achieving asymptotically vanishing FDP and full power up to a polylogarithmic factor. To start with, fix the noise level $\sigma_i^2 = 1$. All $\bm\beta^i$ are equal to a common $\bm\beta$. Let the design matrix $\bm X^i$ of each decentralized model be a $(2p) \times p$ matrix with orthonormal columns, and each $\beta_j$ independently take $\mu := \sqrt{\frac{\log p}{m}}$ with probability half and 0 otherwise. We further assume that $\bm\beta$ is independent of $\bm z^i$, and both $p, m \goto \infty$ but do not differ extremely from each other in the sense that $p = O(\e{m^{0.99}})$ and $m = O(\mathtt{poly}(p))$. This condition allows $m \asymp \log^2 p$ or $m \asymp p^{\alpha}$ for arbitrary $\alpha > 0$. Here, the summary statistics are defined as follows: We regress $\bm y^i $ on the augmented design $[\bm X^i,
\widetilde{ \bm X }^i ]$ ($\widetilde{ \bm X }^i$ is an orthogonal complement of $\bm X^i$), obtain the least-squares estimates $\widehat\beta_j^i, \widetilde\beta_j^i$ for each $1 \le j \le p$, and then take $\chi_j^i = \sgn(\widehat\beta_j^i - \widetilde\beta_j^i)$ and $W_j^i = 1$ or 0, respectively, depending on whether $|\widehat\beta_j^i - \widetilde\beta_j^i|$ is above the median of $|\widehat\beta_1^i - \widetilde\beta_1^i|, \ldots, |\widehat\beta_p^i - \widetilde\beta_p^i|$ or not.


Under the preceding assumptions, the knockoff aggregation almost perfectly recovers the support set of the signal $\bm\beta$, with a total communication cost of $O(mp)$. Let $\bm V \in \{0, 1\}^p$ be constructed as $V_j = 1$ if $\beta_j \ne 0$ and otherwise $V_j = 0$ (so $\bm V$ is uniformly distributed in the cube $\{0, 1\}^p$). Similarly, the output of the knockoff aggregation, denoted as $\widehat{\bm V}_{\textnormal{\tiny KO}}$, takes the form of $\widehat V_{\textnormal{\tiny KO}, j} = 1$ if $H_{0, j}$ is rejected and $\widehat V_{\textnormal{\tiny KO}, j} = 0$ if $H_{0, j}$ is accepted. Last, denote by $\operatorname{Hamm}(\cdot, \cdot)$ the Hamming distance.

\begin{proposition}\label{prop:knockoff_opt}
Let $\epsilon$ be any positive constant. With probability tending to one, this knockoff aggregation with slowly vanishing nominal levels $q$ obeys
\[
\operatorname{Hamm}(\widehat {\bm V}_{\textnormal{\tiny KO}}, \bm V) \le \epsilon p.
\]
\end{proposition}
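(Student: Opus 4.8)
\emph{Proof sketch.} The orthogonality of the design makes everything explicit. Since $[\bm X^i,\widetilde{\bm X}^i]$ has orthonormal columns, the least-squares coefficients are $\widehat\beta_j^i=\beta_j+(\bm X_j^i)^\T\bm z^i$ and $\widetilde\beta_j^i=(\widetilde{\bm X}_j^i)^\T\bm z^i$, so $\widehat\beta_j^i-\widetilde\beta_j^i=\beta_j+g_j^i$ with $g_j^i:=(\bm X_j^i-\widetilde{\bm X}_j^i)^\T\bm z^i$; the vectors $\bm X_j^i-\widetilde{\bm X}_j^i$ are pairwise orthogonal of squared norm $2$, so the $g_j^i$ are \iid $\mathcal N(0,2)$ in both $i$ and $j$ and $\chi_j^i=\sgn(\beta_j+g_j^i)$. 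Hence $\chi_j=\#\{i:g_j^i>-\beta_j\}$ is $B(m,1/2)$ when $\beta_j=0$ and $B(m,p_\mu)$ when $\beta_j=\mu$, where $p_\mu=\Phi(\mu/\sqrt2)$; since $\mu=\sqrt{\log p/m}\to0$ (as $\log p=o(m)$), $p_\mu-\tfrac12=\tfrac{\mu}{2\sqrt\pi}(1+o(1))$, so on the alternative $\E\chi_j-\tfrac m2=\tfrac1{2\sqrt\pi}\sqrt{m\log p}\,(1+o(1))$. In particular the pairs $(V_j,P_j)$, $j=1,\dots,p$, are \iid, with $V_j\sim\mathrm{Bernoulli}(1/2)$, with a null $P_j$ obeying $\P(P_j\le t)\le t$ (as noted after Lemma~\ref{lm:key_lemma}), and with an alternative $P_j$ that is stochastically small.

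Apply Algorithm~\ref{algo:weight} with the confidence function $\Omega(x)=\bm 1\{x\le c\}$ and nominal level $q=2c$, where $c=c_p\to0$ slowly enough that $c_p\log p\to\infty$ and $\log(1/c_p)=o(\log p)$ (e.g.\ $c_p=1/\log p$); then $q_p=2c_p\to0$ is the promised slowly vanishing level, and the summary function $\Gamma$ of Algorithm~\ref{algo:dist} plays no role in what follows. For this $\Omega$ the threshold in Algorithm~\ref{algo:weight} is $\widehat k=\max\{k:(1+A(k))/B(k)\le 2(1-c_p)\}$ with $A(k)=\#\{j\le k:P_{\rho(j)}>c_p\}$ and $B(k)=\#\{j\le k:P_{\rho(j)}\le c_p\}$, and the reported decision rejects $H_{0,j}$ (i.e.\ $\widehat V_{\textnormal{\tiny KO},j}=1$) precisely when $j$ is among the first $\widehat k$ in the $W$-order \emph{and} $P_j\le c_p$.

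\emph{The main step is to show $\widehat k=p$ with probability tending to one.} Here one must not union-bound over the true signals: a single signal has $P_j>c_p$ with probability only polynomially small in $p$ (about $p^{-1/(2\pi)}$), which no union bound over $\asymp p$ signals can absorb. Instead I control expectations. Let $\theta$ be the least integer with $\P(B(m,1/2)\ge\theta)\le c_p$; Hoeffding gives $\theta\le\tfrac m2+\sqrt{(m/2)\log(1/c_p)}$, and because $m(p_\mu-\tfrac12)=\tfrac1{2\sqrt\pi}\sqrt{m\log p}\,(1+o(1))$ dominates $\sqrt{(m/2)\log(1/c_p)}$ — this is where $\log(1/c_p)=o(\log p)$ enters — we get $mp_\mu-\theta\ge\tfrac1{4\sqrt\pi}\sqrt{m\log p}$ for $p$ large. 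Hoeffding again gives $\P(P_j>c_p\mid\beta_j=\mu)=\P(\chi_j<\theta\mid\beta_j=\mu)\le e^{-2(mp_\mu-\theta)^2/m}\le p^{-1/(8\pi)}$, so $\E\#\{j:\beta_j\ne0,\,P_j>c_p\}\le\tfrac p2p^{-1/(8\pi)}=o(p)$; by Markov this count is $o_P(p)$. Combining this with $\#\{j:\beta_j\ne0\}=\tfrac p2+o_P(p)$ and with $\#\{j:\beta_j=0,\,P_j\le c_p\}=o_P(p)$ (its mean is at most $c_p p=o(p)$ because a null $P_j$ is super-uniform), one gets $B(p)=\tfrac p2+o_P(p)$ and hence $A(p)=p-B(p)=\tfrac p2+o_P(p)$, so $(1+A(p))/B(p)\to1$ in probability, which is eventually below $2(1-c_p)\to2$. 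Thus $k=p$ satisfies the defining inequality, and $\widehat k=p$.

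\emph{Conclusion.} On $\{\widehat k=p\}$ one has $\widehat V_{\textnormal{\tiny KO},j}=\bm 1\{P_j\le c_p\}$, so the false negatives are exactly $\{j:\beta_j\ne0,\,P_j>c_p\}$, a set of size $o_P(p)$ by the previous paragraph, hence $\le\epsilon p/2$ with probability $\to1$. For the false positives $\{j:\beta_j=0,\,P_j\le c_p\}$, note that on $\{\widehat k=p\}$ the $\wfdp$ of Algorithm~\ref{algo:weight} equals this count divided by $B(p)\le p$, and $\E\,\wfdp=\wfdr\le q_p\to0$ by Theorem~\ref{thm:fdr_control}, so by Markov the $\wfdp$ is $o_P(1)$ and the number of false positives is $o_P(p)$, hence $\le\epsilon p/2$ with probability $\to1$. (Alternatively, since the $(V_j,P_j)$ are \iid and a null $P_j$ is super-uniform, the false-positive count is stochastically below $B(p,c_p/2)$ and Chernoff's inequality finishes it, using $c_pp\to\infty$.) Intersecting the two high-probability events gives $\operatorname{Hamm}(\widehat{\bm V}_{\textnormal{\tiny KO}},\bm V)\le\epsilon p$ with probability tending to one. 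The real obstacle is the middle step: the per-coordinate test on the $\chi$-statistics has only a constant signal-to-noise ratio on the $\sqrt{\log p}$ scale, so individual signals are missed with non-negligible polynomial probability and exact support recovery is hopeless; the $\epsilon p$ slack is precisely what lets an expectation/Markov argument replace a union bound, and the remaining care lies in calibrating $c_p$ against $q_p$ so that Algorithm~\ref{algo:weight} does not stop early and in keeping the binomial tail estimates within the regime $\log p=o(m)$ where they are legitimate.
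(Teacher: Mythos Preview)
Your proof is correct and follows essentially the same route as the paper's: take the indicator confidence function $\Omega(x)=\bm 1\{x\le c_p\}$, show that the null and alternative $\chi_j$'s separate so that the ratio at $k=p$ tends to $1$ and hence $\widehat k=p$, and then count mismatches. Your write-up is in fact more careful than the paper's sketch---you make explicit the expectation/Markov step where a union bound over signals would fail, and you spell out the Hoeffding calibration---whereas the paper simply asserts the two proportions converge and uses $c_p\ll q_p$ rather than your $q_p=2c_p$ (both choices push the stopping threshold above $1$). One trivial slip: your stated condition $c_p\log p\to\infty$ is not met by your own example $c_p=1/\log p$, but that condition is never actually invoked (only $\log(1/c_p)=o(\log p)$ and $c_p p\to\infty$ are used).
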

Hence, the knockoff aggregation is capable of distinguishing almost all the signal features from the noise features, resulting $\fdp \goto 0$ and $\tpp \goto 1$. The nominal level $q$ is spelled out in the proof. 

Next, we move to give the information-theoretic lower bound. Denote by $\bm M^i$ the message sent by the $i$th model, which only depends on the local information $\bm y^i, \X^i$. Then the coordinating center makes decisions $\widehat{\bm V} \in \{0, 1\}^p$ to reject or accept each of the $p$ hypotheses solely based on the $m$ pieces of messages $\bm M^1, \ldots, \bm M^m$. In other words, the protocol is non-interactive. Let $L^i$ be the minimal length of $\bm M^i$ in bit, with a preassigned budget constraint $\E (L^1 + \cdots + L^m) \le B$. The proof of the result below uses tools from \cite{duchi2014}.
\begin{proposition}\label{prop:comm_lower}
Let $\epsilon$ and $C$ be arbitrary positive constants. If the total communication budget 
\[
B = \frac{Cmp}{ \log^{2.1} p},
\]
then for any non-interactive protocol,
\[
\operatorname{Hamm}(\widehat {\bm V}, \bm V) \ge \frac{1 - \epsilon}{2} \, p
\]
holds with probability tending to one.
\end{proposition}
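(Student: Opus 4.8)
The plan is to argue that with a communication budget of only $B = Cmp/\log^{2.1} p$ bits, the coordinating center simply does not receive enough information to locate the signal coordinates, because each signal $\beta_j = \mu = \sqrt{(\log p)/m}$ is so weak that each individual model's observation $\widehat\beta_j^i - \widetilde\beta_j^i \sim \mathcal{N}(\pm\mu/\sqrt{2}, \text{const})$ carries only $\Theta(\mu^2) = \Theta((\log p)/m)$ bits of mutual information about $V_j$. Summed over all $m$ models this is $\Theta(\log p)$ bits per coordinate, i.e.\ $\Theta(p\log p)$ bits total, which is exactly enough to recover $\bm V$; the point is that shrinking the budget by a further $\log^{2.1} p$ factor kills this. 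First I would fix the hard instance: $\bm V$ uniform on $\{0,1\}^p$, and for each $i$ the sufficient statistic is $\bm t^i = (\widehat\beta_j^i - \widetilde\beta_j^i)_{j=1}^p$, a Gaussian vector whose mean is a coordinatewise-signed version of $\mu \bm V$. I would reduce to the coordinate-separable structure: the $p$ coordinates of $\bm V$ are independent, and conditional on $\bm V$ the data across models and coordinates are independent, so this is a product (many-dimensional) estimation problem, the natural setting for the distributed lower-bound machinery of \cite{duchi2014}.

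The core step is an information-theoretic inequality of the form
\[
I\big(\bm V; \bm M^1, \ldots, \bm M^m\big) \le \kappa \, \frac{\log p}{m} \sum_{i=1}^m \E[L^i] \;\le\; \kappa \, \frac{\log p}{m} \, B \;=\; \kappa C \, \frac{p}{\log^{1.1} p},
\]
for an absolute constant $\kappa$. This is the distributed strong-data-processing / "informational contraction" bound: each message $\bm M^i$ of length $L^i$ can reveal at most $O(L^i \cdot \mu^2)$ bits about $\bm V$ because the channel from $\bm V$ to the local data $\bm t^i$ has squared-Hellinger (or KL) curvature $\Theta(\mu^2)$ in the relevant regime, and this is precisely the per-bit information bound proved via the tensorization arguments in \cite{duchi2014} (their private/communication-constrained minimax framework, specialized to the Gaussian location family with tiny separation $\mu$). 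I would then feed this into Fano's inequality in the coordinate-product form: since $H(\bm V) = p$ bits and the coordinates are a priori independent, any estimator $\widehat{\bm V}$ based only on the messages satisfies, by a coordinatewise Fano / rate-distortion lower bound,
\[
\E \operatorname{Hamm}(\widehat{\bm V}, \bm V) \ \ge\ p \cdot h^{-1}\!\Big(1 - \tfrac{I(\bm V;\bm M^{1:m}) + 1}{p}\Big)
\]
(with $h$ the binary entropy), and plugging in $I \le \kappa C p/\log^{1.1} p = o(p)$ forces the right-hand side to be $(1/2 - o(1))p$. A Markov-inequality / concentration step then upgrades the expectation bound to the stated "with probability tending to one" statement, using that $\operatorname{Hamm}$ concentrates because the per-coordinate error events are conditionally independent given the (low-information) messages.

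The main obstacle, and the step where genuine care is needed, is establishing the per-bit information contraction constant $\Theta(\mu^2)$ in a way that holds uniformly over \emph{arbitrary} (adaptively chosen, but non-interactive) encoders $\bm M^i = \bm M^i(\bm t^i)$ of bounded expected length, and correctly handling the fact that a single model's data $\bm t^i$ is $p$-dimensional so a message could in principle concentrate all its bits on a few coordinates. The resolution is the tensorization lemma of \cite{duchi2014}: because the $p$ signal coordinates are independent and each coordinate's local likelihood ratio has $\chi^2$-divergence $\le e^{\mu^2} - 1 = \Theta(\mu^2)$ (using $\mu^2 = (\log p)/m \to 0$, valid under the stated regime $p = O(e^{m^{0.99}})$ so that $\mu \to 0$), the mutual information between $\bm V$ and \emph{any} $L^i$-bit function of $\bm t^i$ is at most $c\,L^i\,\mu^2$ with $c$ absolute — one does not get to "spend" bits more efficiently by concentrating them. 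I would also need to double-check the boundary regime $m \asymp \log^2 p$ (there $\mu^2 \asymp 1/\log p$, still $\to 0$, fine) and confirm that replacing $\E L^i$ budgets with worst-case lengths, or the quantization of the original $W_j^i$, does not change the order — both are routine given the $1/\log^{2.1}p$ slack deliberately built into $B$.
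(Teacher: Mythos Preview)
Your overall strategy --- bound $I(\bm V;\bm M^{1:m})=o(p)$ and then invoke a Fano-type inequality --- is exactly the paper's. But two of the steps you sketch do not go through as written.

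\textbf{The contraction constant.} You assert $I(\bm V;\bm M^i)\le c\,\mu^2\,\E[L^i]$, i.e.\ a per-bit contraction of order $(\log p)/m$, and attribute this to the tensorization in \cite{duchi2014}. The lemma the paper actually invokes from that reference does not give this clean bound; it gives, after truncating the Gaussian observation at level $a$,
\[
I(\bm V;\bm M^i)\ \le\ C\,\frac{\delta^2 a^2}{\sigma^4}\,H(\bm M^i)\ +\ p\,h_2(q^\star)\ +\ p\,q^\star,\qquad q^\star\asymp e^{-(a/\sigma)^2/2}.
\]
Summed over $i$, the tail terms become $mp\,h_2(q^\star)+mp\,q^\star$. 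Since $m$ may be polynomial in $p$, one is forced to take $(a/\sigma)^2$ of order $\log^{1+\eta}p$ to make these $o(p)$; the resulting contraction factor is $\Theta\bigl(\mu^2(a/\sigma)^2\bigr)=\Theta\bigl((\log^{2+\eta}p)/m\bigr)$, not $\Theta\bigl((\log p)/m\bigr)$. This is precisely why the budget exponent must exceed $2$ (hence the $2.1$ in the statement), whereas your bound, if valid, would already work for any exponent above $1$. Note too that the assumption $m=O(\mathtt{poly}(p))$ enters exactly in controlling these tail terms --- your argument never uses it, which is a sign something is missing.

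\textbf{The high-probability step.} You pass from the rate-distortion bound on $\E[\operatorname{Hamm}]$ to a high-probability statement by claiming ``the per-coordinate error events are conditionally independent given the messages.'' That is false in general: each encoder $\bm M^i$ is an arbitrary function of the full $p$-dimensional observation $\bm t^i$, so the posterior of $\bm V$ given $\bm M^{1:m}$ need not factorize over coordinates, and Markov goes in the wrong direction. The paper sidesteps this entirely by using a packing Fano (Lemma~2 of \cite{duchi2014}) directly in probability: with $N_t$ the number of points in a Hamming ball of radius $t=(1/2-\epsilon)p$,
\[
\P\bigl(\operatorname{Hamm}(\widehat{\bm V},\bm V)>t\bigr)\ \ge\ 1-\frac{I(\bm V;\bm M)+\log 2}{\log(2^p/N_t)},
\]
and a large-deviations count gives $\log(2^p/N_t)\asymp p$, so $I=o(p)$ yields the conclusion with no separate concentration argument.
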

Incidentally, the exponent $2.1$ can be replaced by any constant greater than 2. To appreciate this result, note that randomly flipping a coin for each hypothesis would have a Hamming distance about $p/2$ from the true support set $\bm V$. Hence, it is hopeless to draw any statistically valid conclusion based on $O(mp/\log^{2+o(1)} p)$ bits of information in the distributed setting. In a nutshell, for our example a communication budget of $O(mp)$ up to a logarithmic factor, is both sufficient and necessary for recovering the true signal.


\section{Numerical Experiments}
\label{simulate}

In this section, we test the performance of the knockoff aggregation under a range of designs with different sparsity levels and signal strengths.  
Recall that we have $m$ decentralized linear models
\[
\y^i = \X^i \bm\beta^i + \z^i
\]
for $i = 1, \ldots, m$, where $\X^i \in \R^{n_i \times p}$. Our setup is similar to \cite{knockoff}. First, the rows of $\X^i$ are drawn independently from $\mathcal{N} (\bm 0, \bm\Sigma)$, and are also independent of other sub-models. The columns of each $\X^i$ are then normalized to have unit length. Second, given a sparsity level $k$, we randomly sample $k$ signal locations and set $\beta^i_j = A$ for each selected index $j$ and all $i$, where $A$ is a fixed magnitude. Last, we fix the design and repeat the experiment by drawing $\y \overset{\text{iid}}{\sim} \mathcal{N} (\X \bm\beta, \bm I)$. Nominal FDR level is set to be $q = 0.20$.

\subsection{Power Gains with FDR Control}
Our first experiment tests the performance across different $m$ as the sparsity level or signal strength varies. To save space, here we only show the results for the models with independent features, i.e. $\bm\Sigma$ is diagonal. Similar patterns still hold under correlated designs. We take $p = 1000$, and $n_i = 3000$ for each $i$. Fix $\bm\Sigma = \bm I$ and $m = 5$. In this scenario, we take $A = 1.2 \sqrt{(2 {\log p})/{5}} \approx 1.99$, where $\sqrt{(2 {\log p})/{5}}$ is the universal threshold for detection if we had access to the entire datasets of the $m=5$ decentralized models, and 1.2 is a compensation factor for information loss in our communication-efficient aggregation. Each experiment is repeated 30 times.

In the knockoff aggregation, we are allowed to choose the summary function $\Gamma$  (in Algorithm~\ref{algo:dist}) and confidence function $\Omega$ (in Algorithm~\ref{algo:weight}). The choice can be made adaptively to different $m, n, p$ and the design structure. In the following simulation, we take $\Gamma (W_j^1, \ldots, W_j^m) = \sum_{i=1}^m n_i W_j^i$, and $\Omega(P_j) = \bm{1}_{P_j \leq 0.5}$.


Figure~\ref{diff_m_sparsity_iid} shows the FDR and power achieved by knockoff aggregation with fixed signal strength $A = 1.99$ and varying sparsity levels $k = 10, 30, 50, 100$, as well as with fixed sparsity $k = 30$ and varying strengths $A = c \sqrt{(2\log p)/5}$, where $c = 1.0, 1.2, 1.5, 2.0$. Recall that the power is the fraction of identified true discoveries among all the $k$ potential true discoveries. We see that our procedure can effectively control the FDR for different $m$ in both cases. Meanwhile, as $m$ increases power gains are significant even we only need $\widetilde O(mp)$ bits of communication.

\begin{figure}[!h]
\centering
\includegraphics[width=\textwidth, height = 0.3\textwidth]{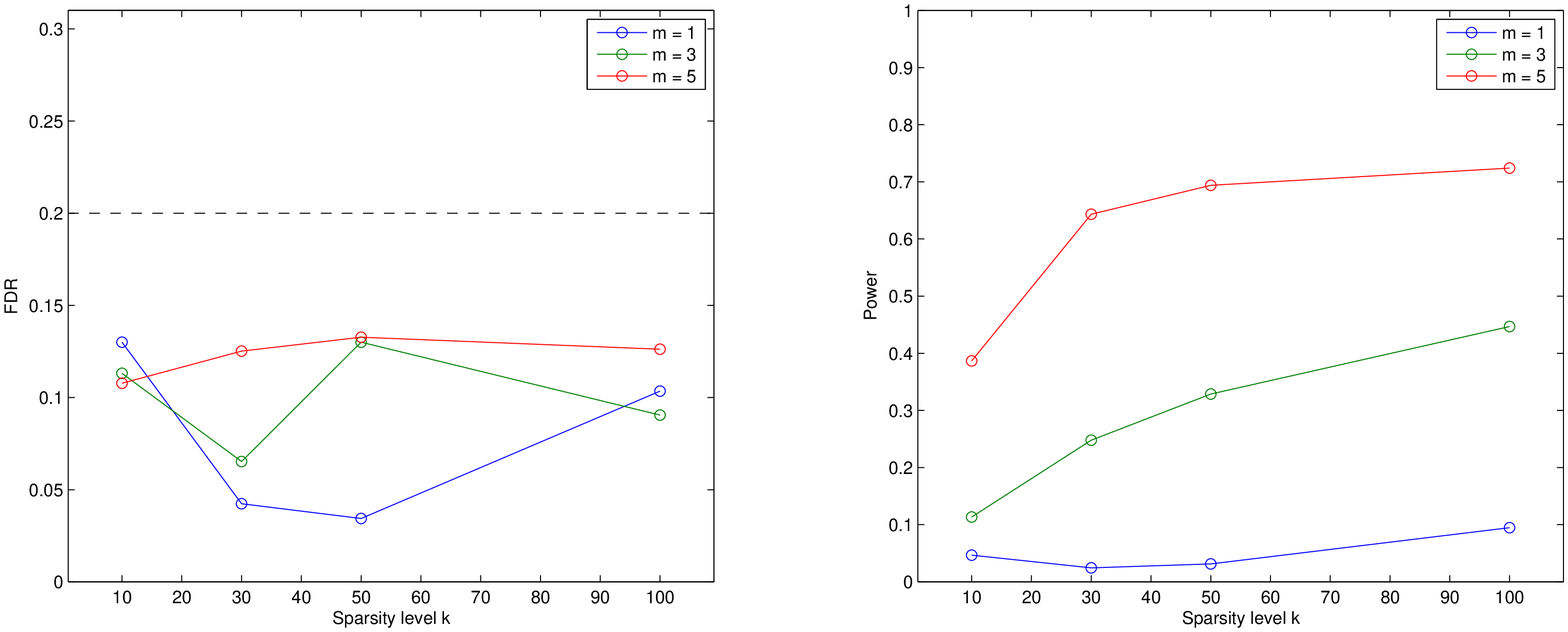}
\vspace{-0.1in}
\includegraphics[width=\textwidth, height = 0.3\textwidth]{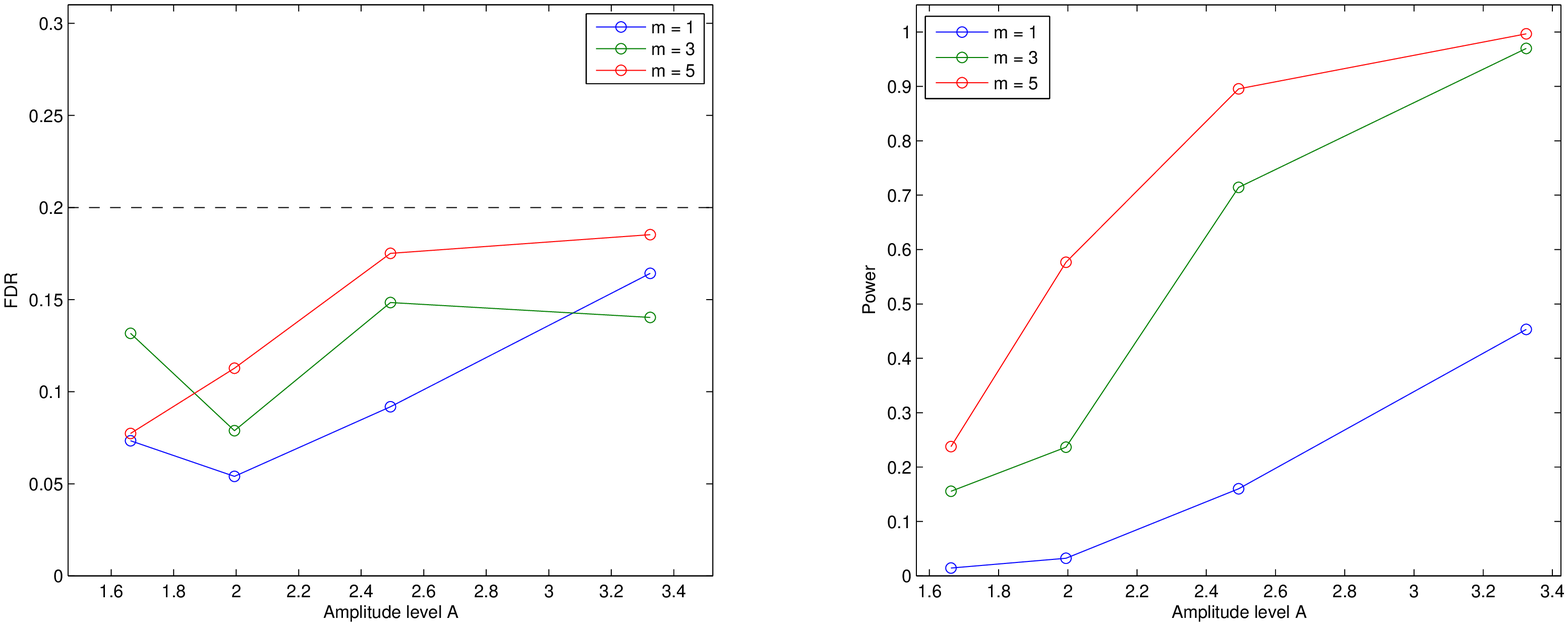}
\caption{Top row: mean FDP and power versus sparsity level $k$ with fixed strength $A = 1.99$. Bottom row: mean FDP and power versus amplitude level $A$ with fixed sparsity $k = 30$. We use i.i.d. design with $p = 1000, n_i = 3000$ and nominal level $q = 0.20$.}
\label{diff_m_sparsity_iid}
\end{figure}

\subsection{Comparison with Other Methods}
We compare the knockoff aggregation with other methods, such as the least-squares (OLS) and the Lasso. For the OLS, we consider the following procedure. For each $i$, we have the OLS estimator $\widehat{\bm \beta}^i$ based on $(\X^i, \y^i)$. This estimator obeys $\widehat{\bm \beta}^i \sim \mathcal{N} (\bm \beta, \bm\Theta^i)$, where $\bm\Theta^i = ((\X^i)^\T \X^i)^{-1}$. Then, $\widehat{\bm \beta}^i$ and the corresponding marginal variances $(\bm\Theta^i_{jj})_{1 \leq j \leq p}$ are aggregated from the $m$ nodes as follows. Let $\widehat{\beta}_j = \sum_{i=1}^m \widehat{\beta}^i_j /m$ be an averaged estimator of $\beta_j$ and $\Theta_j = (1/m^2) \sum_{i=1}^m \Theta^i_{jj}$ be its variance. Set the $z$-score $Z_j = {\widehat{\beta}_j}/\sqrt{\Theta_j}$ for testing $H_{0,j}$. Note that $Z_j \sim \mathcal{N}(0, 1)$ marginally when $\beta_j^1 = \cdots \beta^m_j = 0$. Ignoring the correlations, we apply the BHq procedure directly to the $p$-values derived from the $z$-scores. We simply call this OLS for convenience hereafter.

For the Lasso, we take the following approach. Given data $(\X^i, \y^i)$, we compute the Lasso estimates in parallel
\[
\widehat{\bm \beta}_{\text{Lasso}}^i = \argmin_{\bm b \in \R^p} ~ \frac{1}{2} \|\y^i - \X^i \bm b\|_2^2 + \lambda_i \|\bm b\|_1,
\]
where $\lambda_i \geq 0$ is a regularization parameter and is often selected by cross-validation. In particular, we choose the largest value of $\lambda_i$ such that the cross-validation error is within one standard error of the minimum. For each $i$, the support set of $\widehat{\bm \beta}_{\text{Lasso}}^i$ is sent to the center and a majority vote is applied to determine whether to accept $H_{0,j}$ or not.

To compare these methods, we consider a correlated design as an illustration. Let $p = 500, n_i = 1500$, and $m = 5$. To generate the rows $\X^i_j$, we set $\Sigma_{ij} = 1$ if $i = j$ and $\Sigma_{ij} = -0.3/(0.3 \cdot (p-2) + 1)$ otherwise. Fix the sparsity level $k = 100$ and signal strength $A = 5 \sqrt{2 \log p}$. In this setting, while the powers of these procedures are essentially 1, their behavior in FDP shown in Table \ref{fdp_table} is very distinct. 

\begin{center}
  \begin{tabular}{lcc}
\toprule
& Mean of FDP & SD of FDP \\
\midrule 
    Knockoff Aggregation & 0.1774 & 0.0493 \\ 
    OLS & 0.1433 &  0.1260 \\ 
    Lasso & 0.3458 & 0.0405 \\ 
\bottomrule
  \end{tabular}
  \captionof{table}{FDP of knockoff aggregation, OLS and Lasso.}
  \label{fdp_table}
\end{center}

The Lasso with a cross-validated penalty lacks a guarantee of FDR control (see e.g. \cite{lassoFDR}). In the case of correlated designs, its empirical FDR $0.3458$ is way higher than the nominal level $q = 0.20$. Despite the fact that we choose a sparser model in cross-validation, Lasso still tends to select more variables than necessary. In terms of controlling false discoveries, the Lasso does not give a satisfactory solution. In contrast, the mean FDP of the knockoff aggregation and that of the OLS are both under the nominal level, though the former is slightly higher than the latter. 
However, more importantly, as shown in Figure~\ref{hist_fdp}, the FDPs of the knockoff aggregation are tightly concentrated around the nominal level, while those of the OLS are widely spread---sometimes the proportions of false discoveries can be as high as 70\% for the OLS. For information, the estimated standard deviation of the knockoff FDP is 0.0493, while, in stark contrast, that of the OLS FDP is 0.1260---almost 3 times higher. Such high variability is undesirable in practice. Researchers would not like to take the risk of having 70\% false discoveries in any study.

\begin{figure}[!h]
\centering
\makebox[\textwidth][c]{\includegraphics[width=1.2\textwidth, height = 0.3\textwidth]{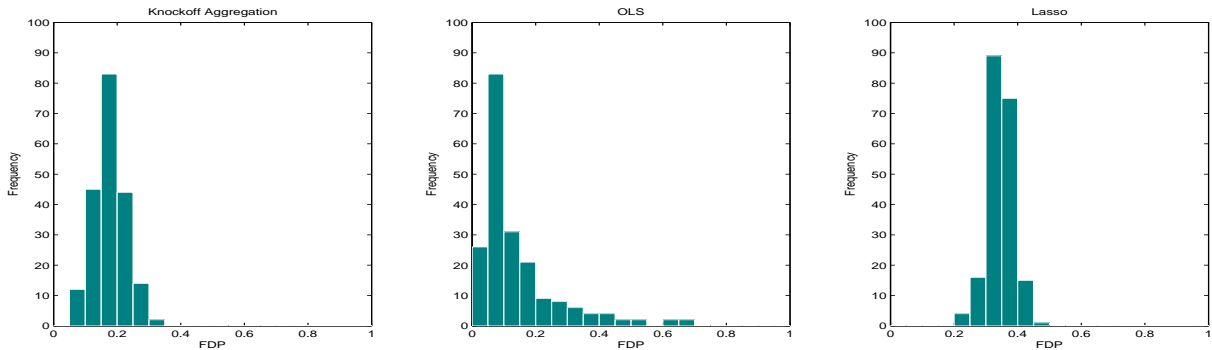}}%
\vspace{-0.1in}
\caption{Histogram of the FDPs by knockoff aggregation, OLS, and Lasso with 200 replicates. $m = 5, p = 500$, and $n_i = 1500$. Sparsity level $k = 100$ and signal strength $A = 5\sqrt{2 \log p}$.}
\label{hist_fdp}
\end{figure}


\section{Discussion}
\label{sec:discussion-1}
We introduce a communication-efficient method for aggregating the knockoff filter running on many decentralized linear models. This knockoff aggregation enjoys exact FDR control and some desired properties inherited from the knockoffs framework. Simulation results provide evidence that this proposed method exhibits nice properties in a range of examples.

Many challenging problems remain and we address a few of them. An outstanding open problem is to generalize the knockoffs framework to the high-dimensional setting $p > n$. This would as well help the knockoff aggregation cover a broader range of applications. In addition, the flexibility of the use of the link functions $\Omega$ and $\Gamma$ leaves room for further investigation: For example, does there exist an optimal $\Omega$ or $\Gamma$? Can these functions be chosen in a data-driven fashion? Last, it would be interesting to incorporate differential privacy (see e.g. \cite{cdwork2013}) in the stage of aggregation, which could lead to much stronger protection of confidentiality.

\subsection*{Acknowledgments}
W.~S.~was supported in part by a General Wang Yaowu Stanford Graduate Fellowship. W.~S.~would like to thank Emmanuel Cand\`es for encouragement. We would like to thank John Duchi for helpful comments, and Becky Richardson for discussions about an early version of the manuscript.

\bibliographystyle{abbrv}
\bibliography{ref}

\clearpage
\appendix
\section{Technical Proofs}
\label{sec:technical-proof}
\begin{proof}[Proof of Lemma~\ref{lm:key_lemma}]
Denote by $S$ the set of indices of all false null hypotheses, that is, $S = \{1 \le j \le p: \mbox{at least one } \beta^i_j \ne 0\}$. Similarly, $S^c$ corresponds to the true null hypotheses. For each $i$, conditional on $\bm W^i := (W^1_1, \ldots, W^i_p)$ and $\bm\chi^i_{S}$, from Lemma~\ref{lm:knockoff_key} it follows that $\bm\chi^i_{S^c}$ has \iid components uniformly distributed on $\{-1, 1\}$. Since the $m$ linear models are independently generated, we thus see that  the concatenation of $\bm\chi^i_{S^c}, 1 \le i \le m$ are uniformly distributed on $\{-1, 1\}^{m|S^c|}$ conditional on all $\bm W^i$ and all $\bm\chi^i_{S}$. Then the proof immediately follows by recognizing that $W_j$ and $\chi_j$ only depend on $W_j^i, 1 \le i \le m$ and $\chi_j^i, 1 \le i \le m$, respectively. The binomial distribution follows from the fact that $\chi_j$ is simply the number of $1$ in $\chi^i_j, 1 \le i \le m$.

\end{proof}

\begin{proof}[Proof of Lemma~\ref{lm:martinglae}]
We use some ideas from the proof of Lemma 4 in \cite{knockoff}. Given the filtration $\mathcal{F}_k$, we know all $V^-(k), \ldots, V^-(p)$ since it always holds that $V^-(k') + V^+(k') = \#\{j ~ \nullx: j \le k'\}$ for all $k' \ge k$. Without loss of generality, we assume all the hypotheses are true due to the observation that $M(j)$ agrees with $M(j-1)$ if the $j$th hypothesis is true. Write $V^+(k) = \sum_{j=1}^k \Omega(P_j) = A$. By the exchangeability of $\Omega(P_1), \ldots, \Omega(P_k)$, we get
\begin{equation}\label{eq:v_mean}
\E(V^+(k-1) | \mathcal{F}_k) = \E(V^+(k-1) | V^+(k) = A) = \frac{k-1}{k}A.
\end{equation} 
Hence, we have
\begin{align*}
\E(M(k-1) | \mathcal{F}_k) &= \E(M(k-1) | V^+(k) = A)\\
&= \E\left( \frac{\sum_{j=1}^{k-1} \Omega(P_j)/k}{ 1 -  \sum_{j=1}^{k-1} \Omega(P_j)/k} \Big{|} V^+(k) = A) \right).\\
\end{align*}
To proceed, note that $x/(1- x)$ is convex for $x < 1$. Since $\sum_{j=1}^{k-1} \Omega(P_j) \in [A-1, A]$ almost surely, by the inverse Jensen inequality we get
\[
\E\left( \frac{\sum_{j=1}^{k-1} \Omega(P_j)/k}{ 1 -  \sum_{j=1}^{k-1} \Omega(P_j)/k} \Big{|} V^+(k) = A) \right) \le \frac{(A-1)/k}{1 - (A-1)/k} \eta + \frac{A/k}{1 - A/k}(1 - \eta),
\]
where $\eta = A/k$ is provided by \eqref{eq:v_mean}. Simple calculation reveals that
\[
\frac{(A-1)/k}{1 - (A-1)/k} \eta + \frac{A/k}{1 - A/k}(1 - \eta) = \frac{A}{1 + k - A} = M(k),
\]
as desired.

\end{proof}

\begin{proof}[Proof of Lemma~\ref{lm:mgle_start}]
Write $\alpha = \E \, \Omega(U) \in (0, 1)$. Note that each summand $\Omega(U_j)$ obeys $0 \le \Omega(U_j) \le 1$ and $\E \, \Omega(U_j) = \alpha$. We assert that the right-hand side (RHS) of \eqref{eq:mgle_start_val} attains the maximum if each $\Omega(U_j)$ is replaced by \iid Bernoulli random variable $B(\alpha)$. (Note that $B(\alpha)$ assumes only $0, 1$ and obeys $\E \, B(\alpha) = \alpha$.) To see this, we examine which $\Omega(U_1)$ gives the maximum conditional expectation while $\sum_{j=2}^p \Omega(U_j)$ is fixed. Write
\begin{equation}\label{eq:u1_condi}
\frac{\sum_{j=1}^N \Omega(U_j) }{ 1 + \sum_{j=1}^N (1 - \Omega(U_j))} = \frac{\Omega(U_1) + \sum_{j=2}^N \Omega(U_j) }{ N+1 - \sum_{j=2}^N \Omega(U_j) - \Omega(U_1)}
\end{equation}
For any constants $c_1 > 0, c_2 > 1$, the function $(x + c_1)/(c_2 - x)$ is convex on $[0, 1]$. Hence, the inverse Jensen's inequality gives
\[
\E \left( \frac{\Omega(U_1) + c_1}{c_2 - \Omega(U_1)} \right) \le \E \left( \frac{B(\alpha) + c_1}{c_2 - B(\alpha)} \right).
\]
Applying the last display to \eqref{eq:u1_condi}, we see that the RHS of \eqref{eq:mgle_start_val} shall never decrease if each $\Omega(U_j)$ is replaced by \iid $B(\alpha)$. As a consequence, it only remains to show
\[
\E\left( \frac{B(N, \alpha) }{ 1 + N - B(N, \alpha)} \right) \le \frac{\alpha}{1 - \alpha},
\]
which has been established in the proof of Lemma 4 in \cite{knockoff}. This finishes the proof.
\end{proof}

\begin{proof}[Proof of Lemma~\ref{prop:knockoff_opt}]
We start with the observation that
\[
\P(\chi^i_j = 1 | \beta_j = \mu) = \frac12 + (1+o(1))\sqrt{\log p/m}.
\]
Hence, $\chi_j = m/2 + (1 + o_{\P}(1)) \sqrt{m\log p}$ if $\beta_j = \mu$ for a fixed $j$. By the central limit theorem, $\chi_j = m/2 + O_{\P}(\sqrt{m})$ if $\beta_j = 0$. Set the confidence function $\Omega(x) = \bm{1}_{x \le c_p}$ for some slowly vanishing sequence $c_p$. Then, as $\log p \goto \infty$ when taking the limit $p \goto \infty$, we have 
\begin{align*}
&\frac{\#\{j: \beta_j =\mu, \Omega(P_j) = 1 \}}{p} \goto \frac12\\
&\frac{\#\{j: \beta_j = 0, \Omega(P_j) = 0 \}}{p} \goto \frac12
\end{align*}
with probability tending to one, which implies that
\[
\frac{1 + \sum_{j=1}^p( 1 - \Omega(P_{\rho(j)}) )}{\sum_{j=1}^p \Omega(P_{\rho(j)})}  \goto 1
\]
for any permutation $\rho(\cdot)$. Then the rejection rule $\widehat k$ given by Algorithm~\ref{algo:weight} takes $p$ with probability approaching one since the targeted upper bound $q/\E\Omega(U) - q = q/c_p - q > 1$ asymptotically (set $c_p \ll q = q_p$). Recognizing that for almost all $j$ the weights $\Omega(P_j) = 1$ if and only if $\beta_j \ne 0$. This is equivalent to saying that $\widehat{\bm V}$ has only a vanishing fraction of indices that do not agree with $\bm V$.

\end{proof}

\begin{proof}[Proof of Proposition~\ref{prop:comm_lower}]
For the sake of generality, replace $2.1$ by $2 + \epsilon_1$ and $\epsilon$ by $\epsilon_2$. The proof makes extensive use of Lemmas 2, 4, and 6 of \cite{duchi2014}. Take $\delta = 1/2, \sigma = 1/\mu$ in Lemma 6 of \cite{duchi2014}, and
\[
a = \sqrt{m} \log^{\frac{\epsilon_1}{4}} p.
\]
First, we show that
\begin{equation}\label{eq:small_mutual}
I(\bm V; \bm{M}) = o(p).
\end{equation}
Recognizing $n_i \equiv 1$ (this is the notation used in \cite{duchi2014}, not in our problem setup), we see that the pair $(a, \delta)$ satisfies Eqn.~(18) of \cite{duchi2014} for sufficiently large $m$. Then, combined with Lemma 4, Eqn~(19b) gives
\begin{equation}\nonumber
\begin{aligned}
&I(\bm V; \bm{M}) \le \sum_{i=1}^m I(\bm V; \bm M^i) \\
&\le 128 \frac{\delta^2 a^2}{\sigma^4} \sum_{i=1}^m H(\bm M^i) + pm h_2(p^\star) + pmp^\star\\
&\le \frac{32 \log^{2 + \frac{\epsilon_1}{2}} p}{m} \sum_{i=1}^m H(\bm M^i) + pm h_2(p^\star) + pmp^\star\\
&\le \underbrace{\frac{32 B \log^{2 + \frac{\epsilon_1}{2}} p}{m}}_{I_1} + \underbrace{pm h_2(q^\star)}_{I_2} + \underbrace{pm q^\star}_{I_3},
\end{aligned}  
\end{equation}
where the last inequality follows from Shannon’s source coding theorem \cite{cover2012}. This inequality asserts that \eqref{eq:small_mutual} is a simple consequence of
\begin{equation}\label{eq:I_three}
I_l = o(p)
\end{equation}
for all $l = 1, 2, 3$. Now we move to prove \eqref{eq:I_three}. For $l = 1$, we have
\[
I_1 = \frac{32 B \log^{2 + \frac{\epsilon_1}{2}} p}{m} = O\left( \frac{mp}{\log^{2 + \epsilon_1} p} \cdot \frac{\log^{2 + \frac{\epsilon_1}{2}} p}{m}\right) \\
= O\left(  \frac{p}{\log^{\frac{\epsilon_1}{2}} p} \right)  = o(p).
\]
For $l = 2, 3$, note that
\[
q^\star = \min\left\{ 2\e{-\frac{(a - 0.5)^2}{2\sigma^2}}, \frac12 \right\}.
\]
Since $a \gg 1$, it follows that the exponent
\[
\frac{(a - 0.5)^2}{2\sigma^2} \asymp \log^{1 + \frac{\epsilon_1}{2}} p.
\]
As a consequence, 
\[
q^\star = \exp\left( -\Theta(\log^{1 + \frac{\epsilon_1}{2}} p) \right) = o(1).
\]
Thus, $I_3 = o(I_2)$, and $I_2$ further obeys
\begin{align*}
I_2 &\asymp -pm q^\star \log q^\star\\
& \asymp \frac{pm \log^{1 + \frac{\epsilon_1}{2}} p}{\exp\left( \Theta(\log^{1 + \frac{\epsilon_1}{2}} p) \right)} = o(p),
\end{align*}
which makes use of $m = O(\mathtt{poly}(p))$. This proves \eqref{eq:small_mutual}.

Next, we proceed to finish the proof by resorting to Lemma 2 of \cite{duchi2014}. To this end, set $t = (0.5 - \epsilon_2) p$ (if $\epsilon_2 \ge 0.5$ then there is nothing to prove). Then, Lemma 2 yields
\begin{equation}\label{eq:continu_fano}
\P\left( \operatorname{Hamm}(\widehat {\bm V}, \bm V) >  (0.5 - \epsilon_2) p \right) \\
\ge 1 - \frac{I(\bm V; \bm M) + \log 2}{ \log \frac{2^p}{N_t}},
\end{equation}
where, in our setting, $N_t = \{\bm v \in \{0, M\}^p: \|\bm v\|_0 \le (0.5 - \epsilon_2) p\}$. By the large deviation theory, we get
\begin{equation}\label{eq:ldt}
\log\frac{2^p}{N_t} \sim ( \log 2 + (0.5 - \epsilon_2)\log(0.5 - \epsilon_2) 
+ (0.5+\epsilon_2)\log(0.5+\epsilon_2) ) p \asymp p.
\end{equation}
Substituting \eqref{eq:small_mutual} and \eqref{eq:ldt} into \eqref{eq:continu_fano}, we obtain
\[
\P\left( \operatorname{Hamm}(\widehat {\bm V}, \bm V) >  (0.5 - \epsilon_2) p \right) \ge 1 - o(1),
\]
which completes the proof.

\end{proof}


\end{document}